\documentclass{article} 
\usepackage{iclr2027_conference,times}

\usepackage{graphicx} 
\usepackage{wrapfig}
\usepackage{subcaption}
\usepackage{amsmath,amssymb,amsthm}
\usepackage{booktabs}

\newtheorem{theorem}{Theorem}

\newtheorem{lemma}{Lemma}

\theoremstyle{definition}
\newtheorem{definition}{Definition}
\newtheorem{assumption}{Assumption}

\newcommand{\bfe}{\boldsymbol{e}}
\newcommand{\bff}{\boldsymbol{f}}
\newcommand{\bfg}{\boldsymbol{g}}
\newcommand{\bfh}{\boldsymbol{h}}

\newcommand{\bfq}{\boldsymbol{q}}

\newcommand{\bfs}{\boldsymbol{s}}

\newcommand{\bfv}{\boldsymbol{v}}

\newcommand{\bfx}{\boldsymbol{x}}
\newcommand{\bfy}{\boldsymbol{y}}

\newcommand{\bfA}{\boldsymbol{A}}

\newcommand{\bfI}{\boldsymbol{I}}
\newcommand{\bfJ}{\boldsymbol{J}}

\newcommand{\bfQ}{\boldsymbol{Q}}

\newcommand{\bfepsilon}{\boldsymbol{\epsilon}}

\newcommand{\bbR}{\mathbb{R}}

\newcommand{\cA}{\mathcal{A}}
\newcommand{\cB}{\mathcal{B}}

\newcommand{\cD}{\mathcal{D}}

\newcommand{\cL}{\mathcal{L}}
\newcommand{\cM}{\mathcal{M}}
\newcommand{\cN}{\mathcal{N}}
\newcommand{\cO}{\mathcal{O}}

\newcommand{\cU}{\mathcal{U}}
\newcommand{\cV}{\mathcal{V}}

\newcommand{\T}{{\!\top\!}}

\usepackage{hyperref}
\usepackage{url}

\title{Kinks vs. Smoothness: Identifiability of Real Analytic nICA for Laplace-like Sources}

\author{Isaac Manring, Kejun Huang \\
Department of Computer Science\\
University of Florida\\
Gainesville, FL 32611, USA \\
\texttt{\{imanring, kejun.huang\}@ufl.edu}
}

\iclrfinalcopy 
\begin{document}

\maketitle

\begin{abstract}
Many machine learning systems try to explain complex data - like images or financial time series - in terms of hidden, independent factors that generated them. Recovering the true underlying factors, rather than some scrambled version of them, is the central challenge of nonlinear Independent Component Analysis (nICA). We prove identifiability (exact recovery) up to trivial ambiguities for real analytic generating functions when source probability density functions have a finite number of discontinuities in the first derivative. The Laplace distribution is the most prominent example satisfying this assumption. Our proof relies on the contrast between kinks in the source distribution and the smoothness of real analytic functions. Real analytic functions comprise a broad class of generating mechanisms, and can be approximated with Normalizing Flows or Variational Autoencoders with standard activation functions (e.g., tanh, softplus, GELU), so our result applies with minimal changes to existing training pipelines. We perform experiments on real and synthetic data with both Normalizing Flows and Variational Auto-Encoders demonstrating their identifiability properties. In experiments on CelebA data we recover several interpretable latent factors controlling unique attributes across the dataset.
\end{abstract}

\section{Introduction}

Nonlinear Independent Component Analysis (nICA) has foundational theoretical implications for unsupervised and representation learning. For observations $\bfx^{(i)} \in \bbR^m$, it seeks to recover statistically independent generative factors (also called sources) $\bfs^{(i)} \in \bbR^k$, such that
\begin{equation} \label{eq:generating}
    \bfx^{(i)} = \bff(\bfs^{(i)}) +\bfepsilon^{(i)} \leftrightarrow \bfg(\bfx^{(i)} - \bfepsilon^{(i)}) = \bfs^{(i)} \ ,
\end{equation}
where $\bff$ is a nonlinear diffeomorphism, with its inverse $\bfg$, and $\bfepsilon^{(i)}$ is noise (see formal definitions in \autoref{sect:prelim}). When $m=k$ the noise component is generally dropped. This model is in general not able to recover the generative factors $\bfs^{(i)}$ because of its non-identifiability (see Definition \ref{def:identifiable} for a formal definition). One of the central goals of nICA is to guarantee the recovery of the true latent factors given a set of assumptions.
These guarantees are essential for interpretable latent representations. Additionally, they are useful for training stability and reproducibility \citep{hyvarinen2024identifiability, KivvaIdentifiability2022}. Last but not least, \citet{damourUnderspecification2022} point to non-identifiability as a cause of the drop in performance of deep learning systems at deployment.

In this paper, we prove identifiability of nICA without auxiliary observations when the source distribution has a discontinuity in the first derivative (for example Laplacian) and $\bff$ and $\bfg$ are real analytic. This is a substantial contribution to nICA theory as the assumption of analyticity is mild. As a caveat we note in \autoref{sect:limitations} that while there are several distributions on compact supports that are theoretically identifiable according to \autoref{thm:id_multi_point}, they are not practically identifiable due to the universal approximation ability of real analytic diffeomorphisms on compact sets.

Neural networks can easily be constrained to be real analytic by requiring the activation functions to be real analytic. Thus training for our method which we dub RAD (Real Analytic Decoders) requires minimal changes to standard techniques, enabling easy adoption.
We demonstrate the validity of the theoretical results through extensive simulation experiments. RAD also demonstrates good stability in training on real world data and recovers several interpretable latent variables for the CelebA dataset \citep{liu2015faceattributes}. 

\section{Related Work}

While nICA is still developing, approaches to guarantee identifiability generally fall into two categories. The first utilizes auxiliary observations such as class labels, time-series indices, or the preceding observation for auto-correlated data \citep{khemakhem2020variational,hyvarinen2019nonlinear, Sorrenson2020Disentanglement}. The sources are assumed to be conditionally independent given the auxiliary variable. \citet{halva2020hidden} extended this using a Hidden Markov Model so that the auxiliary variables can be unobserved too.

The second approach does not assume any auxiliary information, but restricts the function class of $\bff$. \citet{zheng2022identifiability, zheng2023generalizing} proved identifiability assuming that the Jacobian of $\bff$ had a certain sparsity pattern. \citet{nguyen2025diverseinfluencecomponentanalysis} proposed the idea that in order to recover $\bfs^{(i)}$, each source should control a sufficiently distinct aspect of the observations. Mathematically, this was formulated as a sufficiently scattered condition on the rows of the Jacobian of $\bff$. This requires the observations to be in a much higher dimension than the sources.
In a similar vein \citet{buchholz2022function} showed that if the Jacobian was orthonormal at every point (a conformal map), then identifiability could be achieved.
Drawing from optimal transport theory, \citet{huang2021convex} trained a unique neural network to recover the sources. The parameterization ensured that $\bff$ was the gradient of a convex function (its Jacobian was symmetric positive semi-definite).
\citet{yang2022nonlinear} shows identifiability for volume preserving flows when there are at least two distinct but overlapping classes in the data.

In the most comparable work to ours, \citet{KivvaIdentifiability2022} considered real analytic mixture distributions while constraining the form of $\bff$ to be piecewise affine. In several theorems they prove identifiability of increasing strength under assumptions of increasing strength. The result most comparable to \autoref{thm:id_multi_point} shows that with a prior of independent Gaussian Mixture Models (GMMs) and a piecewise affine $\bff$, $\bfs_i$ can be recovered up to permutation, scaling, and translation. While our proof does not resemble \citet{KivvaIdentifiability2022}, the assumptions are interestingly duals of each other: we assume real analytic $\bff$ and non-differentiable source distribution, while they assume piecewise affine $\bff$ and real analytic source distribution.

While at first glance RAD may seem similar to Sparse Autoencoders (SAE) because of the use of a Laplace source distribution, there are several important distinctions. First, SAEs generally only have three layers: linear encoder, ReLU, linear decoder \citep{HubenSparse2024}. RAD in general has many more than this. Second, the latent dimension in SAEs is higher than the observation dimension, which is not allowed in RAD. Thus SAEs form a distinct line of research.

\section{Preliminaries} \label{sect:prelim}

To begin, we introduce some preliminary definitions. We require $\bff$ and $\bfg$ to be real analytic diffeomorphisms.

\begin{definition}[Diffeomorphism \citep{Boumal_2023}] \label{def:diffeomorphism}
A diffeomorphism is a bijective map $\bff: \cU \rightarrow \cV$, where $\cU \subseteq \bbR^k$, $\cV \subseteq \bbR^m$
are open sets such that both $\bff$ and $\bff^{-1}=\bfg$ are smooth.
\end{definition}

Note that diffeomorphisms do not require $m=k$. If $m > k$ (the data are higher dimensional than the generative factors), we simply require the data to live in a lower dimensional embedded submanifold (see Definition \ref{def:ra_submanifold}).

\begin{definition} [Real Analytic Function \citep{KrantzAnalytic2002}] \label{def:real_analytic}
A function $\bff: \cU \subseteq R^m \rightarrow R^n$ is real analytic if for every $\bfx_0 \in \cU$ each component $\bff_i$ admits a convergent Taylor expansion about $\bfx_0$ whose sum is equal to $\bff_i$ in some neighborhood around $\bfx_0$.
\end{definition}

The set of real analytic functions $C^\omega$ is a subset of infinitely differentiable functions $C^\infty$. Many familiar functions, such as polynomials, $\exp(\cdot)$, and $\cos(\cdot)$, are real analytic. $C^\omega$ is closed under composition and inversion.
If the functions $u,v$ are real analytic at point $x_0$ then $u(x_0) + v(x_0)$ and $u(x_0)v(y_0)$ are real analytic at $x_0$. Additionally, $\frac{1}{u(x_0)}$ is real analytic if $u(x_0) \neq 0$.

As mentioned earlier, in the case when $m > k$ we adopt the assumption that the data approximately lie on a low-dimensional submanifold embedded in a higher dimension. This is the standard assumption of representation learning methods, such as VAEs, when the embedded dimension is lower than the observation dimension \citep{goodfellow2016deep}. Because $\bff$ is real analytic this submanifold must be real analytic also.
\begin{definition}[Real Analytic Submanifold \citep{KrantzAnalytic2002}] \label{def:ra_submanifold}
The set $\cM \subseteq \bbR^m$ is a $k$-dimensional real analytic submanifold if 
for each $p\in\cM$ there exist an open $\cV$ with $p \in \cV \subseteq \bbR^m$, a convex open $\cU \subseteq \bbR^k$, and real analytic maps $\bfg: \cV \rightarrow \cU$, $\bff: \cU \rightarrow \cV$ such that
$\cM \cap \cV = \operatorname{image} \bff$ and $\bfg \circ \bff$ is the identity on $\cU$. 
\end{definition}

This assumption implies that there is redundant information in the observations. For example, in an image each pixel value is not free to take on any value as it must be similar to those around it for the image to convey meaningful information. This is why compression is possible.
Because in practice $\bfx^{(i)}$ almost never exactly lie on a submanifold, it is assumed that they are noisy. 
We define $p_{\bff,\bfg}(\bfx)$ as the probability density of $\bfx$ given the model in \autoref{eq:generating} parameterized by $\bff$ and $\bfg$ with known source distribution $\rho$. 
We provide the standard definition of identifiability up to an ambiguity class below.

\begin{definition}[Identifiability] \label{def:identifiable}
The model in \autoref{eq:generating} is said to be identifiable up to equivalence class $\simeq$ if
\begin{equation*}
\forall 
\bfx \in \cM,  p_{\tilde{\bff},\tilde{\bfg}}(\bfx) = p_{\bff,\bfg}(\bfx) \implies \forall \bfx \in \cM, \bfg(\bfx) \simeq \tilde{\bfg}(\bfx).
\end{equation*}
\end{definition}

\section{Main Result} \label{sect:main}

In this section we present the identifiability proof. First, using standard techniques, we transform the model in \autoref{eq:generating} to be in terms of learning a self-map of the sources $\bfh = \tilde{\bfg} \circ \bff$. This step also eliminates the noise from $\bfepsilon$ in the case of $m > k$ and requires Assumption \ref{asm:noise_density}, which comes directly from \citet{khemakhem2020variational}. 
Then in \autoref{thm:id_multi_point} we prove that $\bfh$ must be a signed permutation (scaling and translation are embedded in the knowledge of the source distribution).

\begin{assumption} [\citep{khemakhem2020variational}] \label{asm:noise_density}
The set $\{ \bfx \in \cM| \varphi_\epsilon (\bfx) = 0 \}$ has measure zero, where $\varphi_\epsilon$ is the characteristic function of the density of $\epsilon$.
\end{assumption}

\begin{lemma}
Suppose that $\epsilon$ satisfies Assumption \ref{asm:noise_density} and we learn real analytic $\tilde \bff$ and $\tilde{\bfg}$ such that $\tilde{\bfg} \circ \tilde \bff$ is the identity and 
$\forall \bfx \in \cM, p_{\tilde{\bff},\tilde{\bfg}}(\bfx) = p_{\bff,\bfg}(\bfx)$. Then $\bfh = \tilde \bfg \circ \bff$ is a real analytic diffeomorphism and $\bfh(\bfs) \sim \rho$.
\end{lemma}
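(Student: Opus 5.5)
The proof splits into two parts. The first is the standard deconvolution argument of \citet{khemakhem2020variational}, which removes the noise and shows that the noise-free pushforward distributions coincide. The second uses closure properties of $C^\omega$ to show that $\bfh$ is a real analytic diffeomorphism with $\bfh(\bfs)\sim\rho$.

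\textbf{Step 1 (removing the noise).} Write $p_{\bff,\bfg}=p_{\bff}^{0} * p_\epsilon$, where $p_{\bff}^{0}$ is the law of $\bff(\bfs)$ with $\bfs\sim\rho$, supported on $\cM$. Write $p_{\tilde\bff,\tilde\bfg}$ the same way. Equality of the observed densities gives equality of their Fourier transforms:
\begin{equation*}
\varphi_{\bff(\bfs)}(\omega)\,\varphi_\epsilon(\omega)=\varphi_{\tilde\bff(\bfs)}(\omega)\,\varphi_\epsilon(\omega).
\end{equation*}
By Assumption~\ref{asm:noise_density}, $\varphi_\epsilon\neq0$ almost everywhere, so $\varphi_{\bff(\bfs)}=\varphi_{\tilde\bff(\bfs)}$ almost everywhere. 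Characteristic functions are continuous, so they agree everywhere. Hence $\bff(\bfs)$ and $\tilde\bff(\bfs)$ have the same distribution on $\cM$. When $m=k$ there is no noise and this step is immediate.

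\textbf{Step 2 (transferring the distribution).} Let $\bfs\sim\rho$. Then $\bff(\bfs)\overset{d}{=}\tilde\bff(\bfs)$. Applying the measurable map $\tilde\bfg$ to both sides gives
\begin{equation*}
\bfh(\bfs)=\tilde\bfg(\bff(\bfs))\overset{d}{=}\tilde\bfg(\tilde\bff(\bfs))=\bfs\sim\rho,
\end{equation*}
where the last equality uses $\tilde\bfg\circ\tilde\bff=\mathrm{id}$.

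\textbf{Step 3 (regularity of $\bfh$).} $\bfh$ is a composition of real analytic maps, so it is real analytic. Set $\bfh^{-1}:=\bfg\circ\tilde\bff$, which is also real analytic. To check that it is the inverse, first note that the image of $\bff$ equals the support of $\bff(\bfs)$. By Step 1 this support equals that of $\tilde\bff(\bfs)$, so the images of $\bff$ and $\tilde\bff$ agree. Here I will assume that the support of $\rho$ is the whole domain $\cU$ (for example, $\rho$ has a positive density) and use continuity of the maps; for the Laplace-type sources the paper targets, the support is all of $\bbR^k$. On this common image $\cM$ we have $\tilde\bff\circ\tilde\bfg=\mathrm{id}$ and $\bff\circ\bfg=\mathrm{id}$. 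It follows that
\begin{equation*}
(\bfg\circ\tilde\bff)\circ(\tilde\bfg\circ\bff)=\mathrm{id},
\qquad
(\tilde\bfg\circ\bff)\circ(\bfg\circ\tilde\bff)=\mathrm{id}.
\end{equation*}
So $\bfh$ is bijective with a real analytic inverse, i.e.\ a real analytic diffeomorphism.

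\textbf{Main obstacle.} The delicate point is Step 3: showing that the images of $\bff$ and $\tilde\bff$ coincide exactly, not just up to a null set. Equality in distribution only pins down supports, which are closures of images. So I would argue as follows. The image of each embedding is open in $\cM$. Its closure equals the support of the source pushforward. Since both maps are open onto $\cM$ and the supports agree, the images agree. If this fails on a boundary set, I would restrict to the support of $\rho$. Real analyticity then lets the identity $\bff\circ\bfg=\mathrm{id}$, valid on an open set, extend by the identity theorem.
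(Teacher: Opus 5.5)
Your Steps 1 and 2, and the analyticity half of Step 3, are the paper's proof written out. The paper only cites the deconvolution argument of \citet{khemakhem2020variational} and closure of $C^\omega$ under composition, and leaves the push-forward through $\tilde\bfg$ implicit. For $m>k$, both you and the paper need $p_{\tilde\bff,\tilde\bfg}=p_{\bff,\bfg}$ almost everywhere on $\bbR^m$ for the Fourier step to make sense; equality only on the null set $\cM$, as the lemma is literally phrased, is not enough. You are right that bijectivity of $\bfh$ is the delicate point, but the paper never derives it. Its remark that ``the composition of diffeomorphisms is a diffeomorphism'' presupposes that $\tilde\bff,\tilde\bfg$ are mutually inverse diffeomorphisms between the source space and the \emph{same} $\cM$. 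The same presupposition is built into Definition~\ref{def:identifiable}, which evaluates $\tilde\bfg$ on $\cM$.

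Your attempt to derive $\operatorname{image}\tilde\bff=\operatorname{image}\bff$ from equality in law is where the proposal breaks. The support of $\bff(\bfs)$ is the closure of $\operatorname{image}\bff$, not the image itself. Equality in law only tells you that the two images differ by a null set. Two open sets with the same closure can differ by a nonempty null set, even when both are images of real analytic diffeomorphisms. For example, identify $\bbR^2$ with the complex plane. The map $(x,y)\mapsto(x+ie^{y})^2$ is a real analytic diffeomorphism of $\bbR^2$ onto the plane minus the closed ray $\{(t,0):t\ge 0\}$. That image has the same closure as $\operatorname{image}(\mathrm{id})=\bbR^2$ but is not equal to it.

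The identity-theorem fallback cannot repair this. Whenever $\bfg(\bfx)$ is defined, $\bff(\bfg(\bfx))\in\operatorname{image}\bff$, so $\bff\circ\bfg=\mathrm{id}$ can never be continued to a point $\bfx\notin\operatorname{image}\bff$. Restricting $\bfh$ to a smaller domain does not help either. It does not give what the lemma asserts, namely a diffeomorphism of the whole source space. Theorem~\ref{thm:id_multi_point} relies on that global property; for instance, for fixed $\bfv$, $u$ must sweep through every kink.

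The clean fix is to add $\operatorname{image}\tilde\bff=\cM=\operatorname{image}\bff$ as an explicit hypothesis on the learned model, as the paper tacitly does. With that hypothesis, your inverse $\bfg\circ\tilde\bff$ works verbatim.
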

\begin{proof}
The first part of this proof is relatively trivial.
From Definition 2.7.4 of \citep{KrantzAnalytic2002} it is clear that the composition $\tilde{\bfg} \circ \bff$ is real analytic in $\bbR^k$. It is also clear from Definition \ref{def:diffeomorphism} that the composition of diffeomorphisms is a diffeomorphism.

The second part removes the noise $\bfepsilon$. Using Assumption \ref{asm:noise_density}, \citet{khemakhem2020variational} provided a proof for this in Appendix B.2.2 Step I using a deconvolution argument.
\end{proof}

We utilize the contrast between real analytic generating functions and non-differentiable source distributions to prove identifiability. The non-differentiability of the source distribution $\rho$ must satisfy the following assumption.

\begin{assumption} \label{asm:mult_non_diff_pdf}
$\rho$ is a probability density function with connected support $\cD \subseteq \bbR$. $\log \rho$ can be partitioned into a finite number of real analytic functions $\alpha_0, \dots, \alpha_d$ at distinct breakpoints $\beta_1, \dots, \beta_d$ for $d \geq 1$. The following must hold,
\begin{enumerate}
    \item $\alpha_{i-1}'(\beta_i) \neq \alpha_{i}'(\beta_i) \ \forall i=1,\dots, d$
    \item $\beta_i - \beta_{i-1} > 0 \ \forall i=1,\dots, d+1$ with the convention that $\beta_0=\inf \cD$ and $\beta_{d+1} = \sup \cD$
    \item For $d$ odd, there are analytic continuations of $\alpha_{\frac{d-1}{2}}$ and $\alpha_{\frac{d+1}{2}}$ on an open neighborhood around $\beta_{\frac{d+1}{2}}$. Similarly for $d$ even, there are analytic continuations of $\alpha_{\frac{d}{2}-1}$ and $\alpha_{\frac{d}{2}+1}$ onto the interval $[\beta_{\frac{d}{2}}, \beta_{\frac{d}{2}+1}]$.
\end{enumerate}
\end{assumption}

The last requirement in Assumption \ref{asm:mult_non_diff_pdf} is used in step 3 of the proof of \autoref{thm:id_multi_point} in order to use the Taylor expansion around a common point.
Note that although distributions of the form $\rho(x) \propto \exp(-|x|^\alpha), \alpha \in (0,1)$ satisfy requirements 1 and 2 of Assumption \ref{asm:mult_non_diff_pdf}, they fail on requirement 3.
Several well known distributions satisfy this assumption including the Laplace (also called double exponential) distribution and the triangular distribution.
The probability density function of the Laplace distribution is given by
\begin{equation*}
\rho_{\cL}(s) = \frac{1}{2b}\exp\left(-\frac{1}{b} |s-\mu|\right) ,
\end{equation*}
for location and scale parameters $\mu$ and $b$. The Laplace is well known because its maximum likelihood estimate uses the $\ell_1$ norm penalty which induces sparsity while allowing outliers. Derivatives of the Laplace distribution like Laplace Mixture Model, Log Laplace, and asymmetric Laplace also satisfy Assumption \ref{asm:mult_non_diff_pdf}. A Spike-and-Slab distribution to induce sparsity as in \citet{moran2022identifiable} also satisfies the assumption.

The standard triangular distribution describes the probability density of the sum of two standard uniform random variables. However, in its general form it can also be asymmetric,
\begin{equation*}
\rho_\Delta(x) = 
\begin{cases}
    0, & x < a \\
    \frac{2(x-a)}{(b-a)(c-a)}, & a \leq x \leq c \\
    \frac{2(b-x)}{(b-a)(b-c)}, & c \leq x \leq b \\
    0, & b< x
\end{cases}
\end{equation*}

While this distribution satisfies Assumption \ref{asm:mult_non_diff_pdf}, it may be difficult to use in practice (see \autoref{sect:limitations}). Exponential of many piecewise defined polynomials and other non-standard distributions also satisfy Assumption \ref{asm:mult_non_diff_pdf}.
The essential characteristic is the discontinuity in the derivative of the probability density function which implies a sharp change in the probability law of the source. Larger discontinuities are harder to approximate with real analytic functions and thus may make recovery easier.

\begin{theorem} \label{thm:id_multi_point}
Suppose $\bfh$ is a real analytic diffeomorphism, $\bfs$ is iid according to $\rho$, and $\bfh(\bfs)$ is also iid according to $\rho$. If $\rho$ satisfies Assumption \ref{asm:mult_non_diff_pdf}, then $\bfh$ must be the identity up to signed permutation and translation.
\end{theorem}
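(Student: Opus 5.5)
The plan is to use the change-of-variables identity and exploit the fact that the log-density of the product measure is non-smooth exactly on a union of hyperplanes. Write $L(\bfs)=\sum_{j=1}^k \log\rho(s_j)$ on $\cD^k$. Since $\bfh(\bfs)\sim\rho^{\otimes k}$ and $\bfs\sim\rho^{\otimes k}$, we have
\begin{equation*}
L(\bfs) = L(\bfh(\bfs)) + \log|\det \bfJ_{\bfh}(\bfs)| \quad \text{for all } \bfs\in\cD^k .
\end{equation*}
The term $\log|\det\bfJ_{\bfh}|$ is real analytic, because $\bfh$ is a real analytic diffeomorphism and its Jacobian determinant never vanishes. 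Hence $L$ and $L\circ\bfh$ must have exactly the same set of points where they fail to be real analytic, and the same jumps in gradient across that set. Let $\cN=\bigcup_{j,i}\{s_j=\beta_i\}$, the union of axis-aligned hyperplanes where $L$ has kinks; item 1 of Assumption \ref{asm:mult_non_diff_pdf} guarantees these are genuine kinks. The set where $L\circ\bfh$ is non-analytic is $\bfh^{-1}(\cN)$. The first step is therefore to show $\bfh^{-1}(\cN)=\cN$, i.e.\ $\bfh$ maps the hyperplane arrangement onto itself.

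The second step uses this invariance. Away from intersections, $\cN$ is a smooth hypersurface, so $\bfh$ maps each connected piece of a hyperplane $\{s_j=\beta_i\}$ into some hyperplane $\{s_{\pi(j)}=\beta_{i'}\}$. The map is a diffeomorphism, so it preserves the codimension-2 intersection strata. By analytic continuation (identity theorem), the entire hyperplane $\{s_j=\beta_i\}\cap\cD^k$ maps into a single hyperplane. Parallel hyperplanes have empty intersection, and $\bfh$ preserves this, so parallel families go to parallel families. This yields a permutation $\pi$ with $h_{\pi(j)}(\bfs)$ depending only on $s_j$, because its level sets $\{s_j=\beta_i\}$ are preserved. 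A cleaner way to get the same conclusion is to differentiate: the kink jump $\alpha_{i-1}'(\beta_i)-\alpha_i'(\beta_i)$ times the normal direction must match on both sides, which forces $\nabla h_{\pi(j)}$ to be parallel to $\bfe_j$ on the hyperplane. Combined with the level-set argument, this shows $\bfh$ is a coordinatewise map $h_{\pi(j)}(\bfs)=\phi_j(s_j)$ with each $\phi_j$ a real analytic diffeomorphism of $\cD$ permuting the breakpoints monotonically. Since $\phi_j$ is monotone, it either fixes the breakpoint order or reverses it.

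The third step reduces to one dimension: $\phi_j$ pushes $\rho$ to $\rho$, so $\log\rho(s)=\log\rho(\phi(s))+\log|\phi'(s)|$. Near the middle breakpoint (or middle interval), item 3 of Assumption \ref{asm:mult_non_diff_pdf} lets us expand both sides around a common point. There we match the Taylor coefficients of the analytic pieces, together with the jump sizes at the kinks, which the relation must preserve with factor $|\phi'(\beta)|$. From this we want to conclude $\phi'\equiv\pm1$, so $\phi$ is a translation or a reflection. Item 2 and the ordering of breakpoints give the translation/reflection as claimed.

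I expect the main obstacle to be the third step. Finding that $\phi$ fixes or reverses the breakpoints does not immediately force $\phi$ to be affine, since there are many analytic measure-preserving self-maps of an interval. What I would try first is to compare jump sizes: at a fixed breakpoint, the kink in $\log\rho\circ\phi$ equals $\phi'(\beta)$ times the original kink, so $|\phi'(\beta)|=1$. I would then use the analytic continuations from item 3 to propagate equality of the two analytic pieces across the middle point. The first step also needs care, namely verifying that non-analyticity of $L\circ\bfh$ occurs exactly on $\bfh^{-1}(\cN)$ with no cancellations at crossings.
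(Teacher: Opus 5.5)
Your Step~2 ends with a non sequitur, and the rest of the plan depends on it. You know that $\bfh$ carries each hyperplane $\{s_j=\beta_i\}$ onto some $\{t_{\pi(j)}=\beta_{i'}\}$, with $\nabla h_{\pi(j)}$ parallel to $\bfe_j$ there. This constrains $h_{\pi(j)}$ only on finitely many hyperplanes. All that follows is $h_{\pi(j)}(\bfs)-\beta_{i'}=r(\bfs)(s_j-\beta_i)$, with $r$ analytic and nonvanishing but free to depend on every coordinate off the hyperplanes. For instance, with $d=1$, $s_1+(s_1-\beta_1)^2s_2$ is constant on $\{s_1=\beta_1\}$ with gradient $\bfe_1$ there, yet depends on $s_2$. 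So the one-dimensional identity $\log\rho=\log\rho\circ\phi+\log|\phi'|$ in your Step~3 is not available.

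The paper never establishes coordinatewise structure first. It keeps $r_i(u,\bfv)$, fixes $\bfv$, and observes that $\gamma_i(u)=\log\rho(h_{\pi_1(i)}(u,\bfv))-\log\rho(u)$ is real analytic near $u=\beta_j$. This holds because every other summand of the analytic function $\sum_l\log\rho(s_l)-\sum_l\log\rho(h_l(\bfs))$ is analytic in $u$ there, so the Jacobian never has to be written out.

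Your Step~3 also lacks a propagation mechanism. The jump comparison correctly gives $r(\beta_j)=\pm1$ at a fixed breakpoint, but a value at one point says nothing about $r$ elsewhere. The paper's device in the orientation-preserving case is as follows. Write $r=1+c_n(u-\beta_j)^n+\cdots$ with $c_n$ the first nonzero coefficient, and set $h=h_{\pi_1(i)}(\cdot,\bfv)$. The two one-sided forms of $\gamma_i$ are $\alpha_j\circ h-\alpha_j$ and $\alpha_{j-1}\circ h-\alpha_{j-1}$. Their $(u-\beta_j)^{n+1}$-coefficients are $a_1^{(j)}c_n$ and $a_1^{(j-1)}c_n$, and item~1 of the assumption forces $c_n=0$. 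This yields $h_{\pi_1(i)}(u,\bfv)=u$ for every $\bfv$, giving coordinatewise structure and the conclusion at once. Had you the coordinatewise structure, the increasing case would be trivial anyway: $R\circ\phi=R$ for the CDF $R$ forces $\phi=\mathrm{id}$.

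In the orientation-reversing case no propagation argument can succeed, because the conclusion is false there. The paper's own Case~2 (``the rest of the arguments mirror Case 1'') fails at exactly this point. Let $\psi(s)=s+\tfrac12(1-\cos s)$, a real analytic diffeomorphism of $\bbR$ with $\psi'\ge\tfrac12$ and $\psi(0)=0$. Let $\rho(s)=\tfrac12\psi'(s)e^{-|\psi(s)|}$, which is the law of $\psi^{-1}(Z)$ for $Z$ standard Laplace. Then $\log\rho$ has the single breakpoint $\beta_1=0$, its pieces $\pm\psi+\log\psi'-\log 2$ are analytic on all of $\bbR$, and the derivative jump is $2\psi'(0)\neq0$. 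So Assumption~\ref{asm:mult_non_diff_pdf} holds with $d=1$.

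Now take $\phi=\psi^{-1}\circ(-\psi)$. It is a decreasing real analytic diffeomorphism mapping $\rho$ to itself, with $\phi(0)=0$. It coincides with the only affine candidate $s\mapsto -s$ only if $\psi$ is odd, which it is not. Hence $\bfh(\bfs)=(\phi(s_1),s_2,\dots,s_k)$ contradicts the theorem.

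In the Taylor matching the one-sided forms are now $\alpha_0\circ h-\alpha_1$ and $\alpha_1\circ h-\alpha_0$, whose pure terms no longer cancel. For odd $n$, equating $(n+1)$-th coefficients gives $c_n(a_1^{(0)}-a_1^{(1)})=2(a_{n+1}^{(1)}-a_{n+1}^{(0)})$. In this example that yields $c_1=-\tfrac12$, not $0$. The reversing case therefore needs an extra hypothesis, for example that $\rho$ is symmetric about a point. Composing with that reflection then reduces it to the increasing case.
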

\begin{proof}

The proof proceeds in three steps. In the first step we show that $\sum_i \log \rho(\bfs_i) - \sum_i \log \rho( \bfh_i(\bfs))$ must be real analytic. In the second step, we show that because the kinks in $\log \rho (\bfs_i)$ must be canceled out by the kinks in $\log \rho( \bfh_{\pi_1(i)}(\bfs))$ we can factorize $\bfh_{\pi_1(i)}$ as in \autoref{eq:h_factorize}. Finally in step three, using Taylor expansion we show that $r_i(\bfs) = \pm 1$ in \autoref{eq:h_factorize}.

Because $\bfs \sim \rho$ and $\bfh(\bfs) \sim \rho$, the change of variables formula gives
\begin{align*}
\prod_i\rho(\bfs_i) = |\det \bfJ_h(\bfs)| \prod_i\rho(\bfh_i(\bfs)) \\
\sum_i \log \rho(\bfs_i) - \sum_i \log \rho( \bfh_i(\bfs)) = \log |\det \bfJ_h(\bfs)|
\end{align*}
Because $\bfh$ is invertible $\det \bfJ_h(\bfs) \neq 0$. Further due to the continuity of $\det \bfJ_h(\bfs)$ it always has the same sign. Therefore, the absolute value reduces to a constant multiple. Since $\bfh$ is real analytic, $\bfJ_h(\bfs)$ is also real analytic. The determinant is a polynomial function, which is real analytic, as is the logarithm. Therefore $\log \det \bfJ_h(\bfs)$ is real analytic. This implies that $\gamma(\bfs)=\sum_i \log \rho(\bfs_i) - \sum_i \log \rho( \bfh_i(\bfs))$ must also be real analytic.

Let $\cA_{ij}=\{\bfs | \bfs_i=\beta_j\}$ and $\cB_{lm} = \{\bfs | \bfh_l(\bfs)=\beta_m \}$. At $\cA_{ij}$ there is a non-differentiable point in $\log \rho(\bfs)$ as you vary $\bfs_i$. It must be canceled out by $\sum_i \log \rho( \bfh_i(\bfs))$ to maintain the real analytic property. Thus $\cA_{ij} \subset \bigcup_{l,m} \cB_{lm}$ because there can only be non-analytic points in $\sum_i \log \rho( \bfh_i(\bfs))$ around $\bigcup_{l,m} \cB_{lm}$ because $\bfh$ is real analytic.

Suppose that $\nexists \ l,m, $ such that $\bfh_l \equiv \beta_m$ on an open neighborhood of $\bfs \in \cA_{ij}$. 
For real analytic $\bfh_l - \beta_m$ the measure of the zero-set is zero \citep{mityagin2015zerosetrealanalytic}.
Thus $\cB_{lm} \cap \cA_{ij}$ has a measure of zero. But if all $\cB_{lm}$ have measure zero in $\cA_{ij}$ then we reach a contradiction in the measure of $\cA_{ij}$. So $\exists \ l,m$ such that $\bfh_l \equiv  \beta_m$ on an open neighborhood of $\cA_{ij}$. By the real analytic identity theorem (Corollary 1.2.6 in \citep{KrantzAnalytic2002}) $\bfh_l \equiv  \beta_m$ on all of $\cA_{ij}$.

Because $\bfh_l$ is constant on $\cA_{ij}$ the directional derivatives $\bfv^\T \nabla \bfh_l(\bfs) = 0, \forall \bfs \in \cA_{ij}, \bfv_i=0$ which implies $\nabla \bfh_l(\bfs) \in \text{span}\{\bfe_i\}$.
Suppose that both $\bfh_l-\beta_j$ and $\bfh_k-\beta_j$ vanish on $\cA_{ij}$, then $\nabla \bfh_l(\bfs), \nabla \bfh_k(\bfs) \in \text{span}\{\bfe_i\}, \forall \bfs \in \cA_{ij}$. However, this contradicts the inverse function theorem which says that the Jacobian of $\bfh$ is invertible.
Therefore $\cA_{ij} = \cB_{\pi(i,j)}$ for permutations $\pi$.

$\pi$ can be further simplified. For $j\neq j'$, $\varnothing = \cA_{ij} \cap \cA_{ij'} = \cB_{\pi(i,j)} \cap \cB_{\pi(i,j')}$. However, $\cB_{lm} \cap \cB_{l'm'}=\varnothing$ only when $l=l'$ and $m' \neq m$.
Therefore, we can factorize $\pi(i,j)$ into $\pi_1(i),\pi_2(j)$. This means that all kinks in $\rho(\bfs_i)$ are canceled by a single dimension of $\bfh$. For simplicity we decompose $\bfs$ into $u = \bfs_i$ and $\bfv$ be the rest of $\bfs$.
For a fixed $\bfv$, $\bfh$ must pass through each kink exactly once or cancellation will not be achieved.
Therefore because of continuity in $\bfh_{\pi_1(i)}(u,\bfv)$ with respect to $u$ $\pi_2$ can be reduced to the identity or the reverse order permutation.

Because $\bfh_{\pi_1(i)} - \beta_{\pi_2(j)}$ vanishes precisely when $u - \beta_j$ vanishes and $\nabla \bfh_{\pi_1(i)} \neq \mathbf{0}$, we can use the Weierstrass Preparation Theorem (Theorem 6.1.3 \citep{KrantzAnalytic2002}) to factorize $\bfh_{\pi_1(i)}- \beta_{\pi_2(j)}$
\begin{equation}\label{eq:h_factorize}
\bfh_{\pi_1(i)}(u, \bfv)- \beta_{\pi_2(j)}= r_i(u,\bfv)(u - \beta_j)
\end{equation}
where $r_i(u,\bfv) \neq 0$ and is real analytic. Note here the Weierstrass polynomial is of degree 1 because $\nabla \bfh_i \neq \mathbf{0}$.

Step 3 utilizes the uniqueness of the Taylor coefficients of $\gamma_i(u) = \log \rho(\bfh_{\pi_1(i)}(u,\bfv)) - \log \rho(u)$ by expanding multiple segments of $\gamma_i$ around a shared center point to prove that $r_i(\bfs) \equiv \pm 1$. Due to space constraints, we leave the full details for \autoref{app:step_3}.

Using $r_i \equiv \pm 1$ in \autoref{eq:h_factorize}, we see that $\bfh$ is a signed permutation with a potential translation.

\end{proof}

To learn real analytic diffeomorphisms we consider two well known neural network architectures: normalizing flows \citep{Kobyzev_2021} and variational autoencoders \citep{kingma2019VAE}.
Since affine transformations are real analytic, constraining a neural network to be real analytic only requires the activation functions to also be real analytic. 

\begin{wrapfigure}{R}{0.5\textwidth}
\vskip -0.2in
\centering
\centerline{\includegraphics[width=0.5\columnwidth]{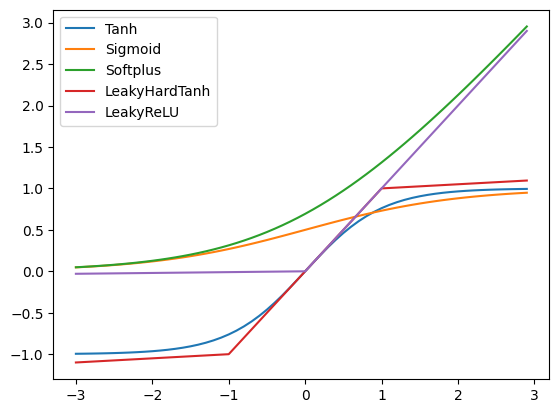}}
\caption{Common monotonic activation functions}
\label{fig:act_fun}
\vskip -0.95in
\end{wrapfigure}

Normalizing Flows require invertible activation functions, which means that they must be monotonic. We classify several popular monotonic activation functions shown in \autoref{fig:act_fun} by their membership in $C^\omega$. The activation functions below are monotonic and real analytic,
\begin{enumerate}
    \item $\operatorname{Tanh}(x) = \frac{e^x - e^{-x}}{e^x + e^{-x}}$
    \item $\operatorname{Sigmoid}(x) = \frac{1}{1 + e^{-x}}$
    \item $\operatorname{Softplus}(x) = \log ( 1 + e^x)$
\end{enumerate}
The following are monotonic but not real analytic,
\begin{enumerate}
    \item $\operatorname{LeakyReLU}(x) = \begin{cases}
        x, & x \geq 0 \\
        \alpha x, & x < 0
    \end{cases}$
    \item $\operatorname{LeakyHardTanh}(x) = \begin{cases}
        \alpha x - 1, & x < -1 \\
        x, & |x| \leq 1 \\
        \beta x + 1, & x > 1 \\
    \end{cases}$
\end{enumerate}

VAEs do not require invertible activation functions and so may also use more modern alternatives like $\operatorname{Swish}$ \citep{ramachandran2017searchingactivationfunctions} or $\operatorname{GELU}$ \citep{hendrycks2023gaussianerrorlinearunits}, both of which are real analytic.

\subsection{Assumption Tightness}

In this section we discuss the tightness of the assumptions leading to identifiability in \autoref{thm:id_multi_point}. Some of the most well known nICA identifiability counter-examples are the so called Measure Preserving Automorphisms (MPAs) \citep{HYVARINEN1999429}. For any continuous source distribution $\rho$ with mild regularity assumptions there exists a transformation $\bfq$ such that $\bfq(\bfs) \sim \cN(\mathbf{0}, \bfI)$. This $\bfq$ can be constructed by $q = \Phi^{-1} \circ R$ where $\Phi$ is the Gaussian Cumulative Distribution Function (CDF) and $R$ is the CDF of $\rho$.
Since the normal distribution is invariant under orthogonal transformation, $\bfQ \bfq(\bfs) \sim \cN(\mathbf{0}, \bfI)$. Finally the sources can be transformed back to their original distribution so that $\bfq^{-1} \left(\bfQ \bfq(\bfs)\right)$ has the same distribution as $\bfs$.

This provides a clear counter-example when we relax the requirement of $\bfh$ to be real analytic. However, $\bfq$ cannot be real analytic if $\rho$ is not real analytic because $R$ would not be real analytic. MPAs also provide a counter-example when $\bfs$ is distributed according to a real analytic distribution. This includes obvious distributions like Gaussian, but also when there is a discontinuity in the PDF between the 0 and non-zero parts such as for the Gamma or Beta distributions.

\subsection{Limitations} \label{sect:limitations} 

The Whitney Approximation Theorem \citep{KrantzAnalytic2002} states that real analytic functions can approximate any $C^r$ function for any $r \geq 0$ to arbitrary precision on a compact set (closed and bounded in euclidean space). Further these approximation results can be attained by neural networks. \citet{IshikawaApproximation2023} provides general results on the universal approximation ability of invertible neural networks (Normalizing Flows). With our architecture we impose the constraint that $\bff \in C^\omega \subset C^\infty$. So \citet{IshikawaApproximation2023} demonstrates that our method is a universal function and universal density approximator. \citet{PuthawalaApproximation2022} provides universal approximation results for deep latent neural networks with dimensionality reduction.

Proving identifiability for universal function approximators is a two edged sword. On the one hand, they can approximate any generating function, but on the other hand they can also approximate identifiability counter-examples like Measure Preserving Automorphisms (MPAs). Therefore, while \autoref{thm:id_multi_point} indicates that RAD is identifiable for compactly supported source distributions, training a real analytic approximator is very difficult because it could approximate an MPA. Note that this does not apply to source distributions on non-compact supports like the Laplace.

In particular, a model with triangular or truncated Laplace source distribution, where the tails are cut off, while theoretically identifiable, may be practically non-identifiable. 
In \autoref{sect:syn_exp} we tested these source distributions. \autoref{tab:dist_comp} indicates that we were not able to recover the sources, confirming our hypothesis about the universal approximation problem.

Incidentally, this is also a limitation of \citep{KivvaIdentifiability2022} due to the universal approximation ability of piecewise affine functions.
One cannot prove identifiability for a class of universal approximators and then rely on those functions to approximate any function not in the class. Rather the practitioner must know something substantive about the generating function in order to achieve identifiability.

\section{Experiment}

We run three experiments (Synthetic, Yahoo Stock, and CelebA) to verify the theory. The synthetic experiments have ground truth labels and thus are the primary means of identifiability verification. We utilize the standard Mean Correlation Coefficient (MCC) metric, which is the mean absolute correlation between learned and true sources after matching. Full details on experiments are in \autoref{app:exp_details}.

\subsection{Synthetic Data} \label{sect:syn_exp}

In these experiments we generate synthetic data and train Normalizing Flows to recover the ground truth factors, in order to confirm \autoref{thm:id_multi_point}.

The synthetic data was generated with source distribution $\rho$ and nonlinear functions $f_i$ as follows: 
\begin{enumerate}
    \item $\bfs^{(i)} \overset{\text{iid}}{\sim} \rho$
    \item $\bfy^{(i)} = \bfA \bfs^{(i)}$
    \item $\bfx = \left[f_1(\bfy_1^{(i)}), \cdots, f_k(\bfy_k^{(i)})\right]^\T$
\end{enumerate}

For each trial a new random normal mixing matrix $\bfA$ was generated. 
To make training easier, we shrank the log singular values by 30\%. This shrinks very high and very low singular values which were hard for the neural networks to learn. The training would slow down when singular values were too high or low. Then we normalized by $\sqrt{k}$ so that the range of $\bfy$ would fall on the most nonlinear part of $f_i$.

Next we applied one of two randomly generated nonlinear $f_i$, which we named Mixture A and Mixture B respectively.
\begin{align*}
    f_i(y) = \frac{1}{4} \sin(y+2\pi w_i) + \frac{y}{3} \\
    f_i(y) = w_i y + \tanh (y)(1-w_i)
\end{align*}
where $w_i \sim U(0,1)$.
Mixture A's generating function includes the linear part in order to maintain invertibility.
In the experiments below we train Normalizing flows with 6 dense invertible layers. Unless otherwise stated, the final layer has no activation applied. For each experiment we trained on 5 different randomly generated datasets using the same seeds across experiments. We calculate Mean Correlation Coefficient (MCC) and report mean and standard deviation across trials.

\begin{table}[tb]
    \centering
    \caption{Average MCC $\pm$ standard deviation for Laplace source model with different activation functions in the Normalizing Flow. Real analytic activation functions achieved substantially higher MCCs than non-real analytic activation functions.}
    \label{tab:act_fun_comp}
    \begin{tabular}{lcccc}
        \toprule
        & \multicolumn{2}{c}{$k=5$} & \multicolumn{2}{c}{$k=10$} \\
        \cmidrule(lr){2-3} \cmidrule(lr){4-5}
        Activation Function & Mixture A & Mixture B & Mixture A & Mixture B \\
        \midrule
        Tanh & 0.944 $\pm$ 0.017 & 0.993 $\pm$ 0.003 & 0.930 $\pm$ 0.023 & 0.983 $\pm$ 0.006 \\
        Sigmoid & 0.895 $\pm$ 0.030 & 0.987 $\pm$ 0.002 & 0.842 $\pm$ 0.085 & 0.987 $\pm$  0.006 \\
        Softplus & 0.833 $\pm$ 0.075 & 0.991 $\pm$ 0.002 & 0.765 $\pm$ 0.110 & 0.980 $\pm$ 0.007 \\
        LeakyHardTanh & 0.431 $\pm$ 0.198  & 0.660 $\pm$ 0.154 & 0.448 $\pm$ 0.057 & 0.540 $\pm$ 0.039 \\
        LeakyReLU & 0.143 $\pm$ 0.159 & 0.066 $\pm$ 0.041 & 0.257 $\pm$ 0.107 & 0.038 $\pm$ 0.011 \\
        \bottomrule
    \end{tabular}
\end{table}

In the first experiment, we compare identifiability of a Laplacian source model for different activation functions for both $m=k=5$ and $m=k=10$ with $n=10,000$ samples. \autoref{tab:act_fun_comp} contains the results. Real analytic activation functions resulted in markedly higher MCCs for both Mixture A and Mixture B as \autoref{thm:id_multi_point} would indicate.

In the second experiment, we compare identifiability of the model with different source distributions.
We fixed the activation function as $\operatorname{Tanh}$ to keep $\bfg$ real analytic and because $\operatorname{Tanh}$ performed the best in the first experiment. Otherwise the architecture was the same. We considered higher dimensions $m=k=15,25$.
Interestingly we found that a lower sample of $n=5,000$ would still achieve high MCCs. 
We tested six different source distributions: Laplace, Laplace Mixture Model (LMM), Gaussian, Gamma, Triangular, and Truncated Laplace. For Laplace and Gaussian source distributions, we set the mean to 0 and scale to 1. For the Gamma source distribution we used $\alpha=5, \beta=5$ and added a $\operatorname{Softplus}$ activation to the final layer in $\tilde \bfg$ to ensure positive values. The LMM was a mixture of two Laplace distributions with scales 1 and means 0 and 1 with a $0.75,0.25$ weighting respectively. Because the standard triangular distribution has support on $[-1,1]$ we applied a $\operatorname{Tanh}$ activation to the final layer. Finally the truncated Laplace distribution was a Laplace distribution truncated to the support $[-2,2]$. So we applied a $\operatorname{Tanh}$ to the final layer and then multiplied by 2.
According to \autoref{thm:id_multi_point} the Laplace, LMM, Triangular, and truncated Laplace models are identifiable while the Gaussian and Gamma are not. 
However, as mentioned in \autoref{sect:limitations}, the Triangular, and truncated Laplace models have compactly supported source distributions and thus are not expected to be able to recover the sources.

\begin{table}[tb]
    \centering
    \caption{Average MCC $\pm$ standard deviation for different source distributions. The Laplace source distribution results in high MCCs even for higher dimensions.}
    \label{tab:dist_comp}
    \begin{tabular}{lcccc}
        \toprule
        & \multicolumn{2}{c}{$k=15$} & \multicolumn{2}{c}{$k=25$} \\
        \cmidrule(lr){2-3} \cmidrule(lr){4-5}
        Distribution & Mixture A & Mixture B & Mixture A & Mixture B \\
        \midrule
        Laplace & 0.933 $\pm$ 0.029 & 0.989 $\pm$ 0.002 & 0.857 $\pm$ 0.072 & 0.986 $\pm$ 0.003 \\
        LMM & 0.880 $\pm$ 0.045 & 0.966 $\pm$ 0.027 & 0.805 $\pm$ 0.048 & 0.966 $\pm$ 0.019 \\
        Gaussian & 0.514 $\pm$ 0.011 & 0.520 $\pm$ 0.006 & 0.417 $\pm$ 0.011 & 0.437 $\pm$  0.011 \\
        Gamma & 0.511 $\pm$ 0.029 & 0.475 $\pm$ 0.025 & 0.432 $\pm$ 0.021 & 0.348 $\pm$ 0.031 \\
        Triangular & 0.453 $\pm$ 0.010 & 0.433 $\pm$ 0.013 & 0.385 $\pm$ 0.005 & 0.385 $\pm$ 0.008 \\
        Trunc. Laplace & 0.484 $\pm$ 0.033 & 0.468 $\pm$ 0.025 & 0.400 $\pm$ 0.006 & 0.394 $\pm$ 0.012\\
        \bottomrule
    \end{tabular}
\end{table}

Results reported in \autoref{tab:dist_comp} show high MCCs for the Laplace and LMM models, and low MCCs for Gaussian, Gamma, Triangular, and Truncated Laplace models. The high MCCs of the LMM model demonstrate that even when there are multiple non-differentiable points the model is still identifiable. Finally we provide an example correlation plot between true and learned sources in \autoref{fig:tanh_v_lrelu} comparing an identifiable Laplace model to a non-identifiable Gamma model. 

\begin{figure}[ht]
    \centering
    \begin{subfigure}[t]{0.45\textwidth}
        \centering
        \includegraphics[width=\textwidth]{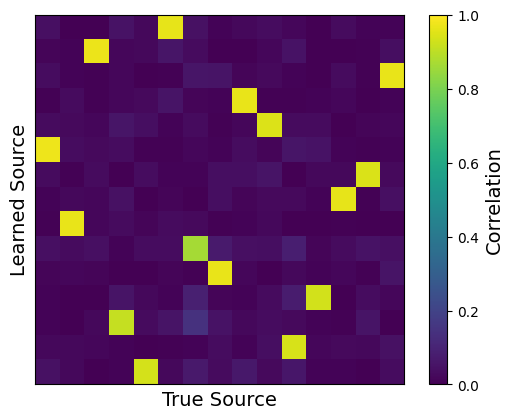}
    \end{subfigure}
    \begin{subfigure}[t]{0.45\textwidth}
        \centering
        \includegraphics[width=\textwidth]{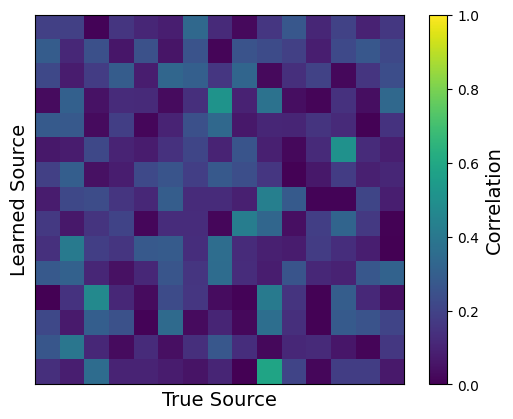}
    \end{subfigure}
    \caption{Correlation plot between learned and ground truth sources for Laplace source (left) and Gamma source (right). The identifiable Laplace model is able to recover the true sources, while the Gamma model is not.}
    \label{fig:tanh_v_lrelu}
\end{figure}

\subsection{Yahoo Stock Returns}

Using yfinance\footnote{https://ranaroussi.github.io/yfinance/}, we downloaded daily closing stock prices for 15 companies for the period 1999-01-22 to 2026-08-31. See appendix for full data and training details. In order to obtain stationary data, we calculated the log returns for each stock, which is given by
\begin{equation*}
    r_s(t) = \log c_s(t) - \log c_s(t-1)
\end{equation*}
for closing price $c_s(t)$ of stock $s$ at time $t$. We then trained Normalizing Flows with $\operatorname{Tanh}$ activations to generate the returns. Because there is no ground truth to validate our results, following \citet{KivvaIdentifiability2022} we trained 10 models starting from distinct random seeds and compared latent sources between models. For all 45 distinct pairs of models we calculated the MCC between the learned sources. 
This was done for several source distributions as reported in \autoref{tab:stock_mcc}.
The results demonstrate that a Laplace prior made training much more stable across seeds, in accordance with \autoref{thm:id_multi_point}.

Additionally, to compare against the results from \citet{KivvaIdentifiability2022} we trained a piecewise affine Normalizing Flow with independent Gaussian Mixture Model source distributions, which we term GMM. Each component had a mixture of two Gaussians that were jointly learned with the Normalizing Flow. We tried LeakyReLU and LeakyHardTanh activations with a 6-layer network. We report the LeakyReLU because it performed slightly better. The poor performance reported in \autoref{tab:stock_mcc} does not disprove the results from \citep{KivvaIdentifiability2022}, but it does demonstrate that this dataset does not satisfy the assumptions of \citep{KivvaIdentifiability2022}.

\begin{table}[tb]
    \centering
    \caption{Average $\pm$ standard error (SE) of MCC between model pairs trained on the Yahoo stock returns dataset for different source distributions. The Laplace distribution is the only one that satisfies Assumption \ref{asm:mult_non_diff_pdf} and thus has the highest average MCC.}
    \label{tab:stock_mcc}
    \begin{tabular}{lc}
        \toprule
        Source Distribution &  Avg MCC $\pm$ SE \\ 
        \midrule
        Laplace & \textbf{0.871 $\pm$ 0.012} \\
        Gaussian & 0.516 $\pm$ 0.003 \\
        Exponential & 0.447 $\pm$ 0.003 \\
        Gamma & 0.492 $\pm$ 0.003 \\
        GMM & 0.480 $\pm$ 0.044 \\
        \bottomrule
    \end{tabular}
\end{table}

\subsection{CelebA}

Lastly, we performed a larger scale experiment on the CelebA dataset \citep{liu2015faceattributes} using the framework provided by \citet{Subramanian2020}.
We trained a VAE with Laplacian prior and a latent space of 128. The encoder and decoder were CNNs with softplus and tanh activations to ensure the network is real analytic. See \autoref{app:celeba_details} for details.

The posterior distribution parameters are provided by $\tilde{\bfg}$. We modeled the posterior with a Laplace distribution since the prior distribution is also a Laplace. For sample $i$ index $j$
\begin{equation*}
p(\bfs^{(i)}_j|\bfx^{(i)}) = \operatorname{Lap}(a^{(i)}_j,b^{(i)}_j) \propto \exp\left(-\frac{|\bfs^{(i)}_j - a^{(i)}_j|}{b^{(i)}_j} \right).
\end{equation*}
\citet{NawaKLD2024} provide the exact form of the KL Divergence between two Laplace distributions,
\begin{equation*}
    D_{KL}\left(\operatorname{Lap}(a_0,b_0) \| \operatorname{Lap}(a_1,b_1)\right) = \log \frac{b_1}{b_0} + \frac{b_0}{b_1} \exp \left(- \frac{|a_0 - a_1|}{b_0} \right) + \frac{|a_0 - a_1|}{b_1} - 1.
\end{equation*}
Following \citep{Bojanowski2017OptimizingTL} we augmented the standard mean squared error (MSE) reconstruction loss with Laplacian Pyramid loss, which helps avoid the excessive blurriness resulting from MSE reconstruction loss. \citet{Bojanowski2017OptimizingTL} gives the Laplacian Pyramid loss as,
\begin{equation*}
\operatorname{Lap}_1(x,x') = \sum_j 2^{2j} |L^j(x) - L^j(x')|_1,
\end{equation*}
where $L_j$ is the $j$-th level of the Laplacian pyramid \citep{LingPyramid2006}. The results shown in \autoref{app:celeba_details} demonstrate that many of the resulting latent variables have a clear interpretation across different images.

\section{Conclusion and Future Work}

Real analytic functions comprise a large class of practically useful functions. Proving identifiability for real analytic transformations of sources with non-differentiable probability density functions is a significant step forward in nICA research. Additionally, unlike many nICA methods \citep{nguyen2025diverseinfluencecomponentanalysis, zheng2022identifiability, gresele2021independent}, RAD does not require expensive Jacobian regularization terms in the objective function. In fact, it can be trained with existing techniques such as Normalizing Flows and Variational Autoencoders.

However, there are still many open problems to be investigated, including the sample complexity of RAD and other nICA methods, and the trade-off between universal approximation and source recovery ability in nICA. Additionally, many nICA methods use VAEs, but assume that the likelihood can be perfectly learned. This is only possible when the KL-divergence between the true posterior and the variational distribution from the encoder is zero. If the encoder has infinite capacity this can happen, but in practice there is a gap between the ELBO and the likelihood. Future work would study how a non-zero gap affects identifiability results.

\section{LLM Usage Disclosure}

While all proofs were written by the authors, an LLM was employed to ideate on proof techniques for some parts of steps 2 and 3 of the proof of \autoref{thm:id_multi_point}. The authors take responsibility for the correctness and comprehensibility of the proofs.
LLMs were also used to provide feedback and suggest improvements for the work.

LLMs were not used to implement any significant portion of the experiments or propose novel ideas. Further LLMs were not used to write the paper apart from the feedback already mentioned.

\section{Reproducibility Statement}

\autoref{thm:id_multi_point}, one of the main contributions, rests on the assumptions explicitly stated in \autoref{sect:main}. The proof of the theorem is contained in \autoref{sect:main} and \autoref{app:step_3}.
To reproduce experiments, we will release the code upon acceptance. Training details are provided in \autoref{app:exp_details} and \autoref{app:celeba_details}.

\bibliography{refs}
\bibliographystyle{iclr2027_conference}
\appendix
\section{Main Theorem Continued Proof Step 3} \label{app:step_3}

We can also decompose $\gamma$ as 
\begin{align*}
    \gamma(u,\bfv) = \gamma_i(u,\bfv) + \sum_{j\neq \pi_1(i)} \log \rho (\bfh_j(u,\bfv)) - \sum_l \log \rho (\bfv_l) \\
    \gamma_i(u,\bfv) = \log \rho(r_i(u,\bfv)(u-\beta_j) + \beta_{\pi_2(j)} ) - \log \rho(u).
\end{align*}
Near $u = \beta_j, \bfs \in \cB_{lm}, \forall (l,m) \neq (i,j)$, $\sum_{l\neq \pi_1(i)} \log \rho (\bfh_l(\bfs)) - \sum_l \log \rho (\bfv_l)$ is real analytic, so $\gamma_i$ must also be real analytic. At this point for notational simplicity, we focus on a fixed $\bfv$ and therefore drop $\bfv$ from the notation. We consider two cases which require slightly different techniques. The cases correspond to when $\bfh_{\pi_1(i)}$ is the identity of $u$ and when $\bfh_{\pi_1(i)}$ reverses $u$.

\textbf{Case 1}: $r_i(u) > 0$
Because $\gamma_i(u)$ is real analytic, $\gamma_i'(u)$ must be continuous across $u=\beta_j$. 
\begin{align*}
\lim_{u\rightarrow \beta_j^+}\gamma_i'(u) 
= \lim_{u\rightarrow \beta_j^+} \left(\alpha_{\pi_2(j)}(r_i(u)(u-\beta_j) + \beta_{\pi_2(j)}) - \alpha_{j}(u)\right)' \\
= \lim_{u\rightarrow \beta_j^+} \alpha_{\pi_2(j)}'(r_i(u)(u-\beta_j) + \beta_{\pi_2(j)})(r_i'(u)(u-\beta_j) + r_i(u)) - \alpha_{j}'(u) \\
= \alpha_{\pi_2(j)}'(\beta_{\pi_2(j)})r_i(\beta_j) - \alpha_{j}'(\beta_j)
\end{align*}
The same reasoning can be done for $u \rightarrow \beta_j^-$. Continuity at $u=\beta_j$ gives,
\begin{align*}
\alpha_{\pi_2(j)}'(\beta_{\pi_2(j)})r_i(\beta_j) - \alpha_{j}'(\beta_j)
 = \alpha_{\pi_2(j)-1}'(\beta_{\pi_2(j)})r_i(\beta_j) - \alpha_{j-1}'(\beta_j)
\end{align*}
\begin{equation} \label{eq:r_ps_const}
r_i(\beta_j) = \frac{\alpha_{j}'(\beta_j) - \alpha_{j-1}'(\beta_j)}{\alpha_{\pi_2(j)}'(\beta_{\pi_2(j)}) - \alpha_{\pi_2(j)-1}'(\beta_{\pi_2(j)})}
\end{equation}
But since $\frac{\partial \bfh_l(\beta_j)}{\partial u} = r_i'(\beta_j)(\beta_j-\beta_j) + r_i(\beta_j) > 0$, $\pi_2$ is the identity from the continuity argument in step 2. Therefore $r_i(\beta_j)=1$.

Let the power series of $r_i$ around $u=\beta_j$ be $r_i(u) = 1+\sum_{l=n}^\infty c_l \left(u-\beta_j\right)^l$, where coefficients $c_1,\dots, c_{n-1}=0$ for $n \geq 1$, where $n=1$ means no coefficients are $0$.
Since $\alpha_j$ is real analytic it can be described by its power series $\alpha_j(x)= \sum_{m=1}^\infty a_m^{(j)} (x-\beta_j)^m$. Therefore
\begin{align*}
\alpha_j(\beta_j + (u-\beta_j)r_i(u)) = 
\sum_{m=1}^\infty a_m^{(j)} \left((u - \beta_j) + \sum_{l=n}^\infty c_l (u-\beta_j)^{l+1} \right)^m \\
\left(u - \beta_j + \sum_{l=n}^\infty c_l (u-\beta_j)^{l+1} \right)^m =  (u-\beta_j)^m + m c_n (u-\beta_j)^{m + n} + \cO((u-\beta_j)^{m + n + 1}) \\
\alpha_j(\beta_j + (u-\beta_j)r_i(u)) - \alpha_j(u) = a_1^{(j)} c_n (u-\beta_j)^{n+1} + \cO((u-\beta_j)^{n+2})
\end{align*}
We can do the same thing for $\alpha_{j-1}$ to get 
\begin{equation*}
\alpha_{j-1}(\beta_j + (u-\beta_j)r_i(u)) - \alpha_{j-1}(u) = a_1^{(j-1)} c_n (u-\beta_j)^{n+1} + \cO((u-\beta_j)^{n+2})
\end{equation*}

Because $\gamma_i(u)$ is real analytic, as $u \rightarrow \beta_j$ the coefficients of $(u-\beta_j)^{n+1}$ must be equal. But, $a_1^{(j-1)}\neq a_1^{(j)}$, so $c_n=0$. We can do this inductively to show that all Taylor coefficients of $r_i$ are $0$ except for the constant term. Since $r_i$ is real analytic, this means that $r_i \equiv 1$.

\textbf{Case 2:} $r_i(u) < 0$

When $d$ is odd, for $j = \frac{d+1}{2}$ $\pi_2(j) = j$ due to the fact that $\pi_2$ is a reverse ordering. 
The rest of the arguments mirror Case 1, resulting in $r_i \equiv -1$.

However, for $d$ even, the argument does not mirror exactly. Because $r_i \neq 0$ and $r_i$ is continuous, it must have the same sign on its domain.
There exists $\delta \in [\beta_{\frac{d}{2}},\beta_{\frac{d}{2}+1}]$, such that $\bfh_{\pi_1(i)}(\delta, \bfv) = \delta$. So using the Weierstrass Preparation Theorem as before we factorize $\bfh_{\pi_1(i)}$
\begin{equation*}
\bfh_{\pi_1(i)}(\bfs)- \delta = r_i(\bfs)(\bfs_i - \delta)
\end{equation*}
The power series of $r_i$ and $\alpha_{j}$ are $r_i(u) = c_0+\sum_{l=n}^\infty c_l \left(u-\delta\right)^l$ and $\alpha_{j}(x)=\sum_{m=1}^\infty a_m^{(j)} (x-\delta)^m$.
\begin{align*}
\alpha_{\pi_2(j)}(r_i(u)(u-\delta) + \delta) = \sum_{m=1}^\infty a_m^{(\pi_2(j))} \left(c_0(u-\delta) + \sum_{l=n}^\infty c_l \left(u-\delta\right)^{l+1}\right)^m \\
\left(c_0(u-\delta) + \sum_{l=n}^\infty c_l \left(u-\delta\right)^{l+1}\right)^m = c_0^m(u-\delta)^m + mc_0^{m-1} c_n \left(u-\delta\right)^{n+m} + \cO\left(\left(u-\delta\right)^{n+m+1}\right) \\
\alpha_{\pi_2(j)}(r_i(u)(u-\delta) + \delta) - \alpha_{j-1}(u)\\
= \sum_{m=1}^\infty (a_m^{(\pi_2(j))}c_0^m - a_m^{(j-1)})(u-\delta)^m + 
a_1^{(\pi_2(j))} c_n \left(u-\delta\right)^{n+1} + \cO\left(\left(u-\delta\right)^{n+2}\right)
\end{align*}

This is precisely $\gamma_i$ for $u\in (\beta_{j-1}, \beta_j)$. For each interval we can do this same expansion. Since $\gamma_i$ is real analytic, any power series expansion around $\delta$ must have the same coefficients. From the coefficients of the first order terms we have,

\begin{align*}
a_1^{(\pi_2(j))}c_0 - a_1^{(j-1)} = a_1^{(\pi_2(j)-1)}c_0 - a_1^{(j)} \\
c_0 = \frac{a_1^{(j)} - a_1^{(j-1)}}{a_1^{(\pi_2(j)-1)} - a_1^{(\pi_2(j))}}
\end{align*}

With doing this same calculation for the intervals $(\beta_{\pi_2(j)-1}, \beta_{\pi_2(j)})$ and $(\beta_{\pi_2(j)}, \beta_{\pi_2(j)+1})$ we arrive at

\begin{align*}
c_0 = \frac{a_1^{(\pi_2(j)-1)} - a_1^{(\pi_2(j))}}{a_1^{(j)} - a_1^{(j-1)}} \\
c_0 = \frac{1}{c_0} \implies c_0=-1
\end{align*}

$c_0$ cannot be positive because we assumed $r_i < 0$. The $n+1$ order term must also have equal coefficients for all intervals. Thus for any $j$ the $(\beta_j, \beta_{j+1})$ interval has $n+1$ order coefficient,

\begin{equation} \label{eq:n_coef}
a_{n+1}^{(\pi_2(j)-1)} (-1)^{n+1} - a_{n+1}^{(j)} + a_1^{(\pi_2(j)-1)} c_n
\end{equation}

If $n$ is odd we equate the $n+1$ coefficient for intervals $(\beta_{j-1},\beta_{j})$ and $(\beta_j,\beta_{j+1})$ and then for intervals $(\beta_{\pi_2(j)-1},\beta_{\pi_2(j)})$ and $(\beta_{\pi_2(j)},\beta_{\pi_2(j)+1})$ to get the following two expressions for $c_n$,

\begin{align*}
c_n = \frac{(a_{n+1}^{(\pi_2(j)-1)} - a_{n+1}^{(\pi_2(j))}) - (a_{n+1}^{(j)} - a_{n+1}^{(j-1)})}{a_1^{(\pi_2(j))} - a_1^{(\pi_2(j)-1)}} \\
c_n = \frac{(a_{n+1}^{(j)} - a_{n+1}^{(j-1)}) - (a_{n+1}^{(\pi_2(j)-1)} - a_{n+1}^{(\pi_2(j))})}{a_1^{(j)} - a_1^{(j-1)}} \\
c_n = -c_n \implies c_n = 0
\end{align*}

If $n$ is even we equate the coefficients in \autoref{eq:n_coef} for intervals $(\beta_{j-1}, \beta_j)$ and $(\beta_{\pi_2(j)}, \beta_{\pi_2(j)+1})$ to get,


\begin{align*}
c_n = \frac{(a_{n+1}^{(j-1)} - a_{n+1}^{(\pi_2(j))}) (-1) - (a_{n+1}^{(\pi_2(j))} - a_{n+1}^{(j-1)})}{a_1^{(\pi_2(j))} - a_1^{(j-1)}} = 0
\end{align*}

\section{Experiment Details} \label{app:exp_details}

\subsection{Synthetic}
For the first experiment with $m=k=5,10$ we generated 10,000 data points to train on for 500 epochs. 
For $m=k=15,25$ we generated 5,000 data points to train on for 1000 epochs.
The optimizer was Adam with learning rate 1e-3 and batches of 256 samples.

\subsection{Stock NF}

The stock data was from the period 1999-01-22 to 2026-08-31. We chose large companies from a wide variety of market sectors (\autoref{tab:companies}). For each model we trained for 750 epochs with Adam optimizer, learning rate set at 1e-2, with a decline on plateau scheduler, and batches of 256 samples. We arbitrarily chose the seeds 9-18 to initialize the 10 models with.

\begin{table}[htb]
    \centering
    \caption{Company names of stock data in the Yahoo Stock experiment. For easier replication we provide the Yahoo Finance Symbol over the company names.}
    \label{tab:companies}
    \begin{tabular}{c|c|c|c}
        AAPL &  MSFT & NVDA & AMZN \\
        Apple Inc. & Microsoft Corp. & NVIDIA Corp. & Amazon Inc. \\
        \hline
        ALL & JPM & PFE & JNJ \\
        The Allstate Corp. & JPMorgan Chase \& Co. & Pfizer Inc. & Johnson \& Johnson \\ \hline
        CVX & WMT & CAT & DIS \\
        Chevron Corp. & Walmart Inc. & Caterpillar Inc. & The Walt Disney Co. \\
        \hline
        XOM & AMD & BRK-A & \\
        ExxonMobil Corp. & Advanced Micro Devices Inc. & Berkshire Hathaway Inc.& \\
    \end{tabular}
\end{table}


\section{CelebA} \label{app:celeba_details}

The Variational Autoencoder architecture used convolutional layers for the encoder and convolutional transpose layers for the decoder. The kernel size was 3, with 32, 64, 128, 256, 512 filters for the 5 convolutional layers. We used the $\operatorname{Softplus}$ activation for hidden layers and the $\operatorname{Tanh}$ for the final layer of the decoder. We used a latent dimension of $k=128$ and trained for 100 epochs with a learning rate of $0.005$ on one B200 GPU.

\begin{figure}[ht]
    \centering
    \begin{subfigure}[t]{0.95\textwidth}
        \centering
        \includegraphics[width=\textwidth]{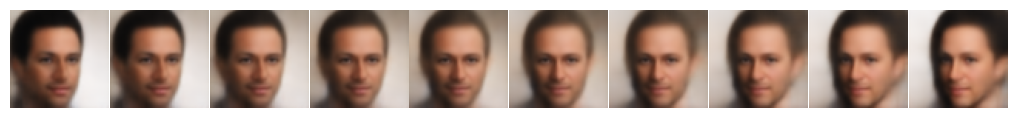}
    \end{subfigure}
    \begin{subfigure}[t]{0.95\textwidth}
        \centering
        \includegraphics[width=\textwidth]{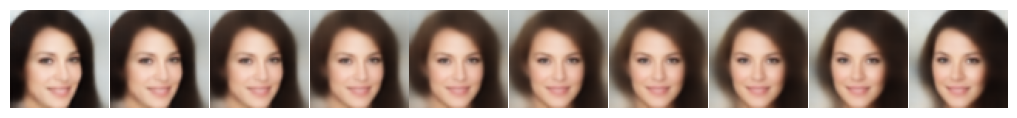}
    \end{subfigure}
    \begin{subfigure}[t]{0.95\textwidth}
        \centering
        \includegraphics[width=\textwidth]{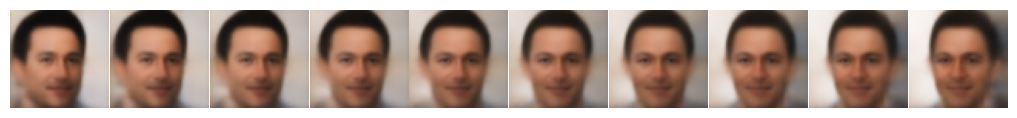}
    \end{subfigure}
    \begin{subfigure}[t]{0.95\textwidth}
        \centering
        \includegraphics[width=\textwidth]{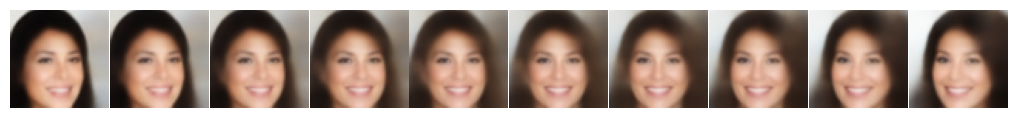}
    \end{subfigure}
    \caption{Head orientation}
    \label{fig:celebA_l34}
\end{figure}

\begin{figure}[ht]
    \centering
    \begin{subfigure}[t]{0.95\textwidth}
        \centering
        \includegraphics[width=\textwidth]{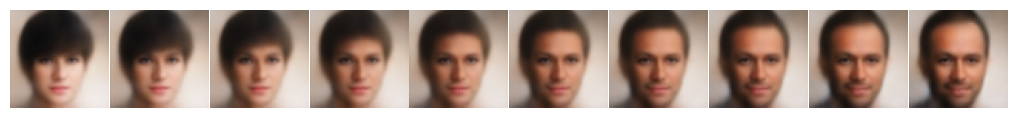}
    \end{subfigure}
    \begin{subfigure}[t]{0.95\textwidth}
        \centering
        \includegraphics[width=\textwidth]{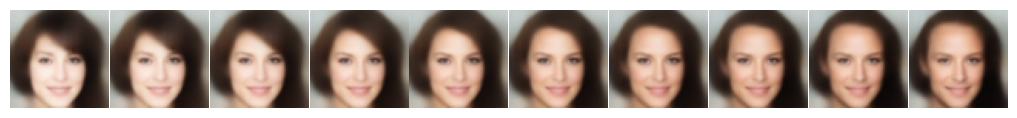}
    \end{subfigure}
    \begin{subfigure}[t]{0.95\textwidth}
        \centering
        \includegraphics[width=\textwidth]{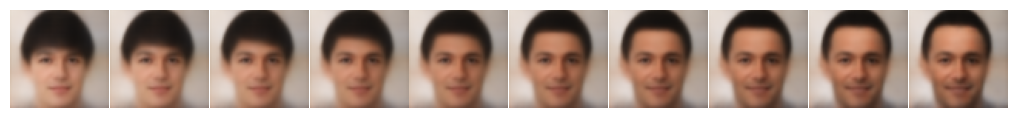}
    \end{subfigure}
    \begin{subfigure}[t]{0.95\textwidth}
        \centering
        \includegraphics[width=\textwidth]{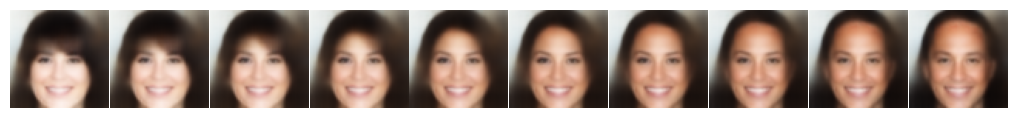}
    \end{subfigure}
    \caption{Bangs}
    \label{fig:celebA_l41}
\end{figure}

\begin{figure}[ht]
    \centering
    \begin{subfigure}[t]{0.95\textwidth}
        \centering
        \includegraphics[width=\textwidth]{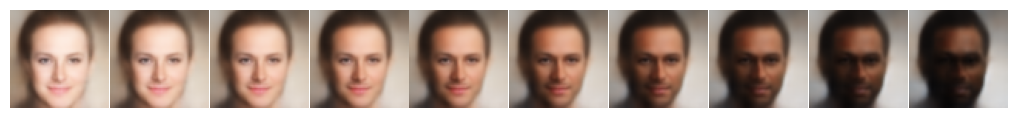}
    \end{subfigure}
    \begin{subfigure}[t]{0.95\textwidth}
        \centering
        \includegraphics[width=\textwidth]{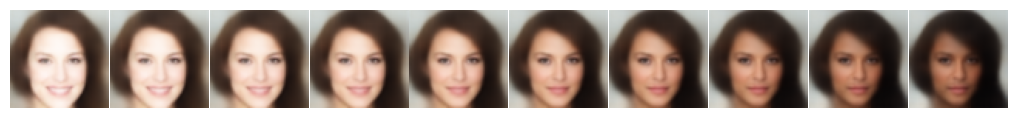}
    \end{subfigure}
    \begin{subfigure}[t]{0.95\textwidth}
        \centering
        \includegraphics[width=\textwidth]{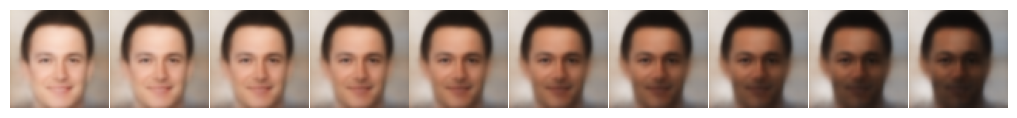}
    \end{subfigure}
    \begin{subfigure}[t]{0.95\textwidth}
        \centering
        \includegraphics[width=\textwidth]{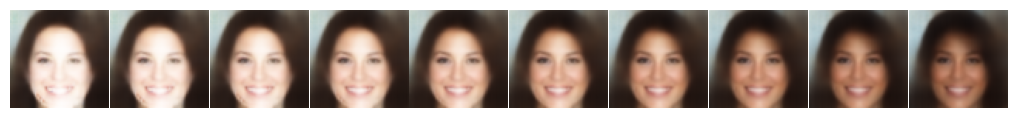}
    \end{subfigure}
    \caption{Skin Color}
    \label{fig:celebA_l82}
\end{figure}

\begin{figure}[ht]
    \centering
    \begin{subfigure}[t]{0.95\textwidth}
        \centering
        \includegraphics[width=\textwidth]{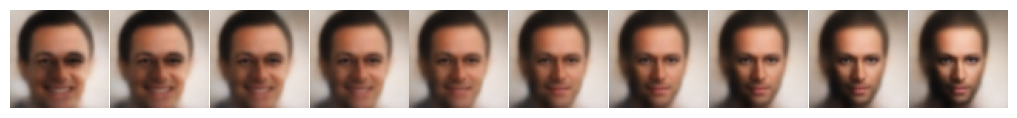}
    \end{subfigure}
    \begin{subfigure}[t]{0.95\textwidth}
        \centering
        \includegraphics[width=\textwidth]{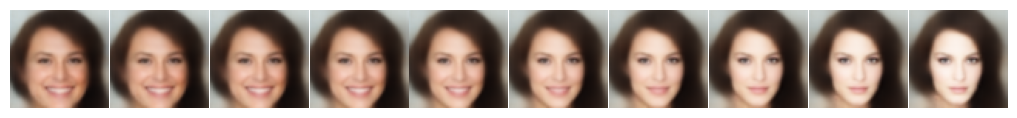}
    \end{subfigure}
    \begin{subfigure}[t]{0.95\textwidth}
        \centering
        \includegraphics[width=\textwidth]{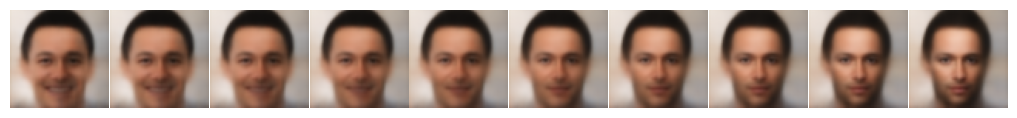}
    \end{subfigure}
    \begin{subfigure}[t]{0.95\textwidth}
        \centering
        \includegraphics[width=\textwidth]{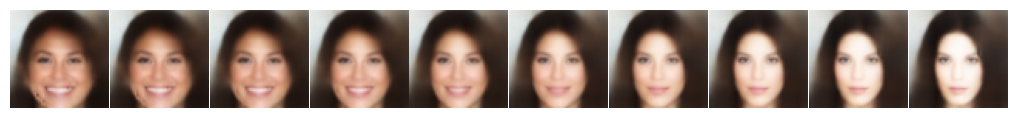}
    \end{subfigure}
    \caption{Smile/Broad face}
    \label{fig:celebA_l83}
\end{figure}

\end{document}